\icmltitlerunning{Simplifying DINO via Coding Rate Regularization}
\begin{document}

\twocolumn[
\icmltitle{Simplifying DINO via Coding Rate Regularization}

\icmlsetsymbol{equal}{*}

\begin{icmlauthorlist}
\icmlauthor{Ziyang Wu}{ucb}
\icmlauthor{Jingyuan Zhang}{tsc}
\icmlauthor{Druv Pai}{ucb}
\icmlauthor{Xudong Wang}{ucb}\\
\icmlauthor{Chandan Singh}{ms}
\icmlauthor{Jianwei Yang}{ms}
\icmlauthor{Jianfeng Gao}{ms}
\icmlauthor{Yi Ma}{ucb,tsc,hku}
\end{icmlauthorlist}

\icmlaffiliation{ucb}{UC Berkeley}
\icmlaffiliation{ms}{Microsoft Research}
\icmlaffiliation{tsc}{TranscEngram}
\icmlaffiliation{hku}{HKU}

\icmlcorrespondingauthor{Ziyang Wu}{zywu@berkeley.edu}

\icmlkeywords{Self-supervised learning, DINO}

\vskip 0.3in
]

\printAffiliationsAndNotice{}  %

\begin{abstract}
\dino{} and \dino{}v2 are two model families being widely used to learn representations from unlabeled imagery data at large scales. Their learned representations often enable state-of-the-art performance for downstream tasks, such as image classification and segmentation. However, they employ many empirically motivated design choices and their training pipelines are highly complex and unstable --- many hyperparameters need to be carefully tuned to ensure that the representations do not collapse --- which poses considerable difficulty to improving them or adapting them to new domains. In this work, we posit that we can remove most such-motivated idiosyncrasies in the pre-training pipelines, and only need to add an explicit coding rate term in the loss function to avoid collapse of the representations. As a result, we obtain highly simplified variants of the \dino{} and \dino{}v2 which we call \simdino{} and \simdino{}v2, respectively. Remarkably, these simplified models are more robust to different design choices, such as network architecture and hyperparameters, and they learn even higher-quality representations, measured by performance on downstream tasks, offering a Pareto improvement over the corresponding \dino{} and \dino{}v2 models. This work highlights the potential of using simplifying design principles to improve the empirical practice of deep learning.
\end{abstract}

\section{Introduction}

Self-supervised learning (SSL) is the toolkit of choice to learn representations for large datasets of unlabeled images \citep{hadsell2006dimensionality,oord2018representation,wu2018unsupervised,grill2020bootstrap,he2020momentum,bardes2021vicreg,chen2021exploring,caron2021emerging,zhou2021ibot,he2022masked,assran2023self,oquab2023dinov2}, captioned images \citep{radford2021learning}, videos \cite{feichtenhofer2022masked}, and text \citep{radford2018improving,devlin2018bert,radford2019language,brown2020language}, among other modalities. In the context of image SSL, there are two main approaches: \textit{reconstructive} \citep{he2022masked}, where the goal is to reconstruct some function of the true image data from a ``view'', i.e., corruption or augmentation, and \textit{contrastive} \citep{hadsell2006dimensionality}, where the goal is, for each image, to have the features of different views of the image all be close, and features of views of different images be far. 

Within contrastive SSL, a key challenge lies in preventing \textit{representation collapse}, where models learn trivial solutions that map all inputs to the same output. One common approach to address this is through the use of \textit{negative samples}, which explicitly encourages representations of different images to be dissimilar. Thus far, the success of using negative samples depends on having a large batch size \citep{wu2018unsupervised,he2020momentum}, which poses computational challenges at scale. Methods which attempt to avoid this bottleneck by using negative samples in more implicit and indirect ways to avoid collapse \citep{caron2021emerging} can cope with smaller batch sizes, but often require training pipelines with many components and hyperparameters carefully tuned to avoid collapse, making them difficult to train.

The state-of-the-art for image SSL is generally considered to be the \dino{}v2 model family \citep{oquab2023dinov2}. It is built on the \dino{} model family \citep{caron2021emerging}. Both classes of models are trained using contrastive SSL and thus run into the representation collapse issue. While \dino{}v2 explicitly and directly uses negative samples to avoid collapse, it inherits much of its training pipeline from \dino{}, which uses negative samples more indirectly. As such, \textit{both} model families' training pipelines are highly complex and unstable, requiring many tweaks and careful hyperparameter selection in order for the training to converge for a given architecture. Despite this capriciousness, the trained models' representations are highly useful for downstream tasks, and are widely used \cite{baharoon2023towards,wei2024stronger}.

\begin{figure*}[t]
    \begin{subfigure}[t]{0.55\textwidth}
        (a): \dino{}
        
        \includegraphics[width=\textwidth]{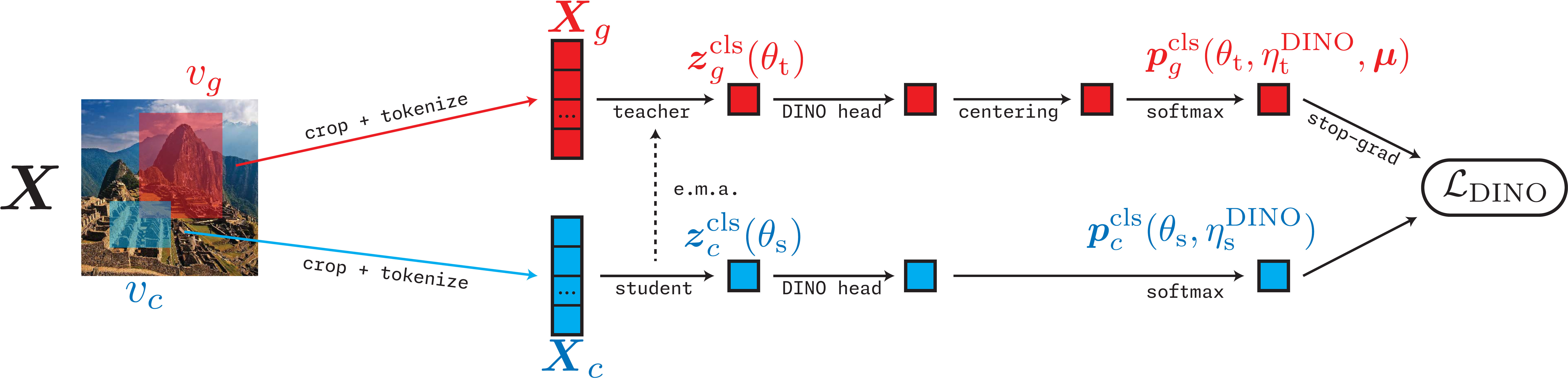}
    \end{subfigure}
    \hfill 
    \begin{subfigure}[t]{0.4\textwidth}
        (b): \simdino{}

        \includegraphics[width=\textwidth]{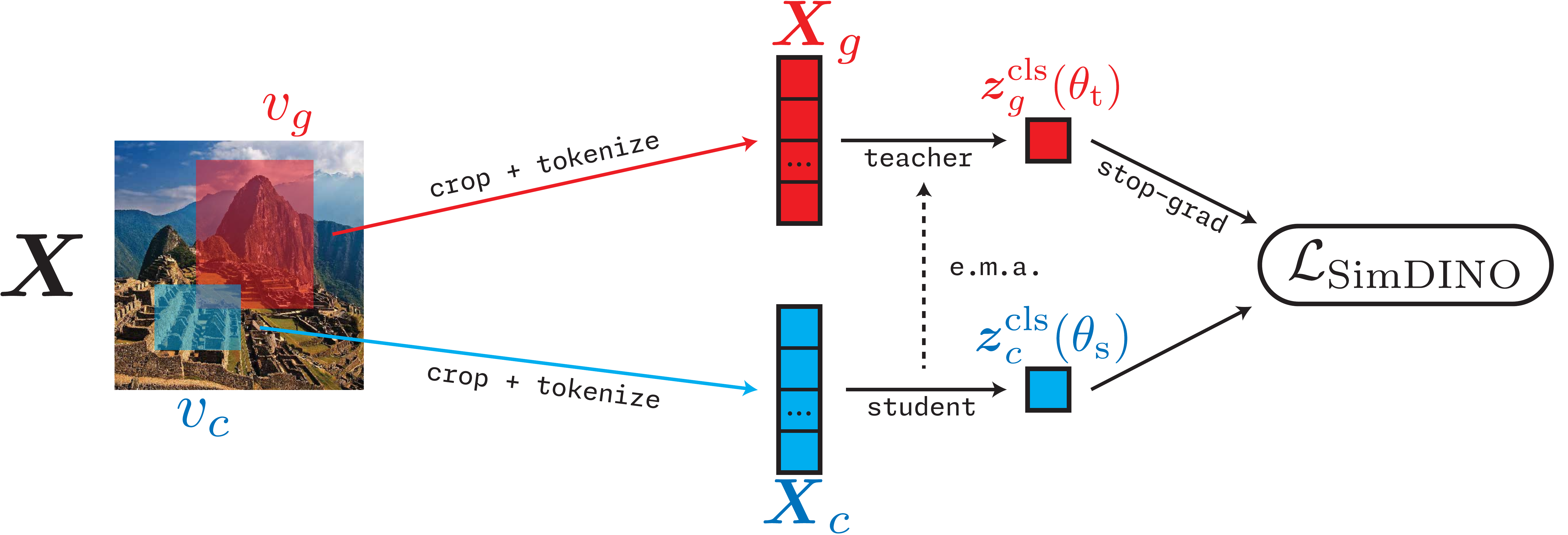}
    \end{subfigure}

    \vspace{1em}
    
    \begin{subfigure}[t]{0.55\textwidth}
        (c): \dino{}v2
        
        \includegraphics[width=\textwidth]{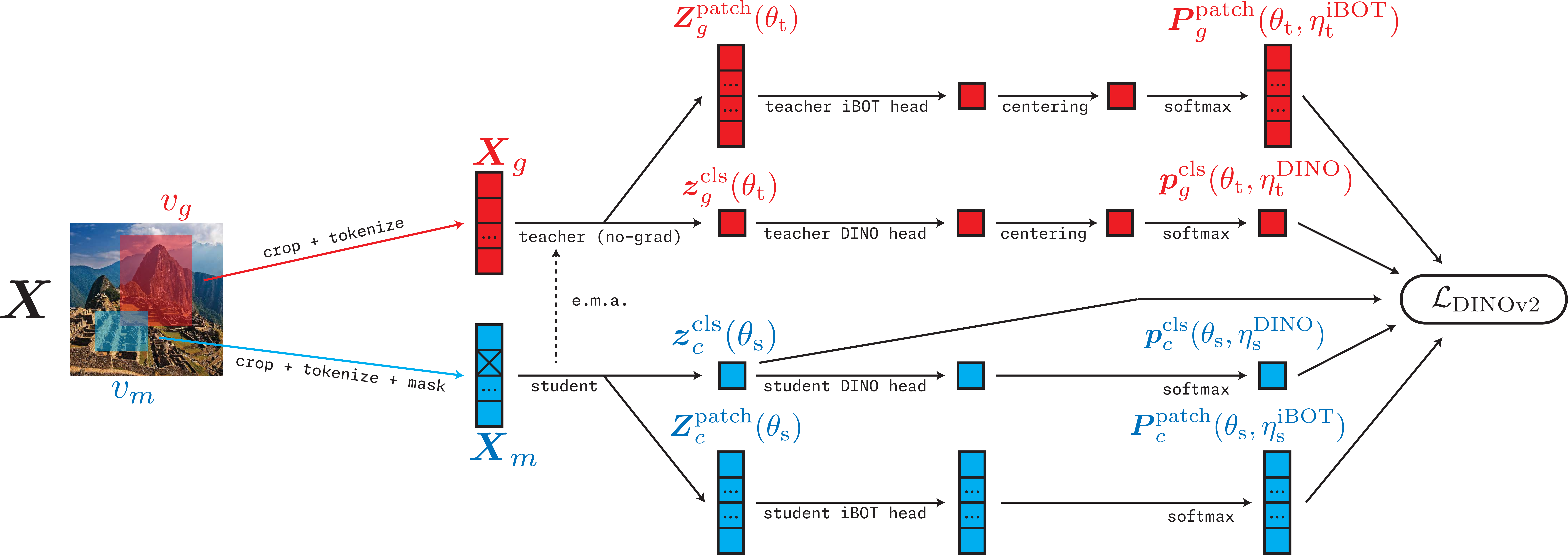}
    \end{subfigure}
    \hfill 
    \begin{subfigure}[t]{0.4\textwidth}
        (d): \simdino{}v2
        
        \includegraphics[width=\textwidth]{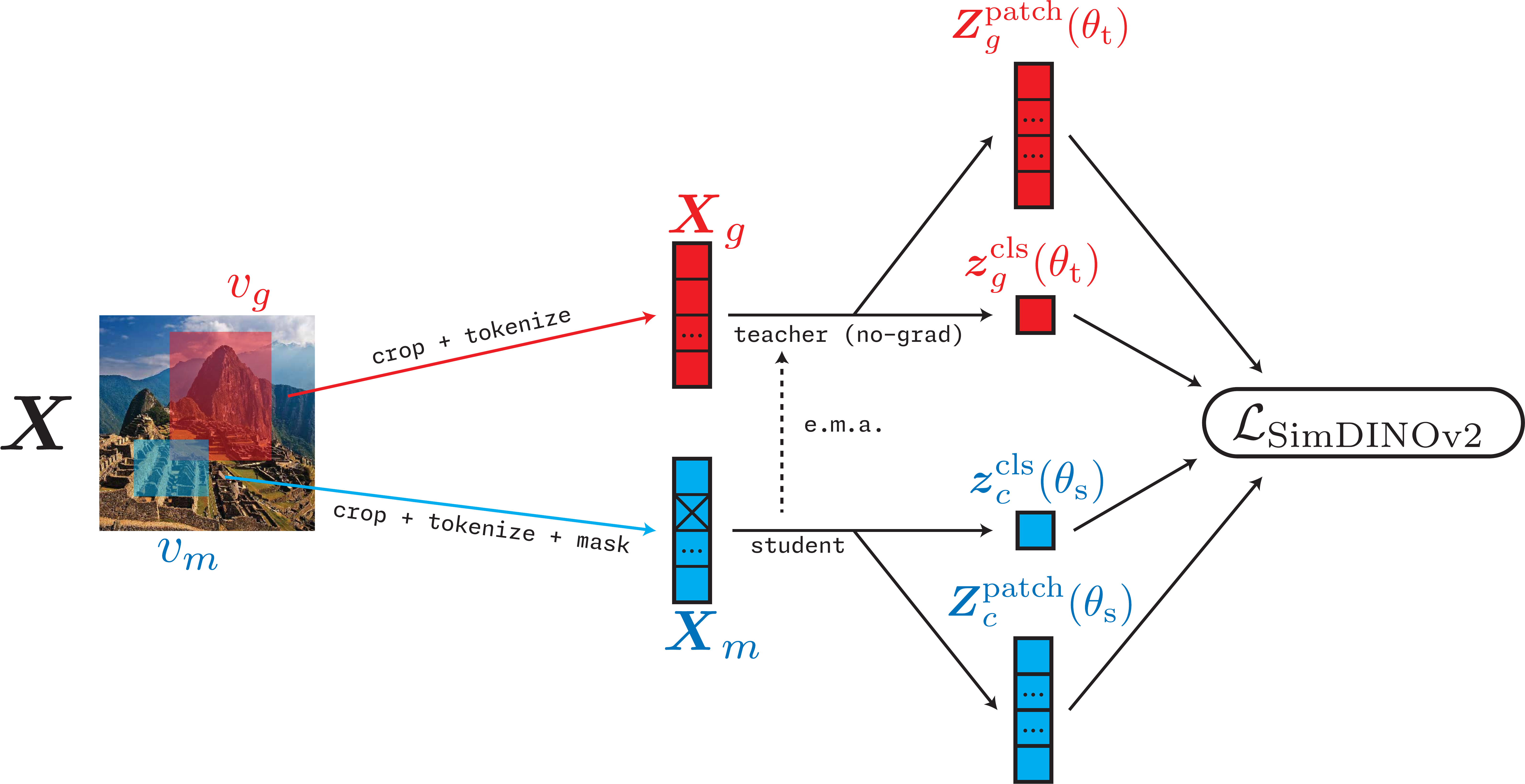}
    \end{subfigure}

    \caption{
    \small
    \textbf{The \dino{} and \dino v2 pipelines are substantially simplified to the respective  \simdino{} and \simdino v2 pipelines.}
    (a) In the \dino{} pipeline, an input image is turned into patches. Then a global view \(v_{\globalview}\) and a local view \(v_{\crop}\) are randomly sampled.
    The global view is pushed through the teacher encoder, while the other view is through the student encoder.
    (b) The \simdino{} pipeline removes the need for expensive post-processing operations present in \dino{}, such as a dimension-increasing linear layer and a high-dimensional softmax.
    (c) The \dino{}v2 pipeline adds masking (here masked patches are denoted by \(\times\)) and an additional loss on image patch features to the \dino{} pipeline.
    (d) The \simdino{}v2 training operates directly on the learned representations, simplifying the pipeline.
    \vspace{-2mm}
    }
    \label{fig:dino_pipeline}
\end{figure*}

\paragraph{Our contributions.}
In this work, we remove many tweaks and hyperparameters from the \dino{} and \dino{}v2 training pipelines, replacing them with a term in the objective which explicitly uses negative samples. We show empirically that this term, which involves the \textit{total coding rate} regularizer \citep{ma2007segmentation,yu2020learning,li2022neural}, enables much more simple, robust, and computationally efficient training pipelines, as shown in Figure \ref{fig:dino_pipeline}. We show that the resulting models, named \simdino{} and \simdino{}v2, learn representations that achieve even higher state-of-the-art performance as those learned by \dino{} and \dino{}v2 across a variety of downstream tasks. Our work underscores the value of understanding and simplifying pipelines to improve performance in vision SSL.

\vspace{-2mm}
\paragraph{Notation.} Let \(C, H, W, D, N, d \geq 1\) be positive integers. Let the space of finite sequences of vectors in \(\R^{D}\) be denoted as \(\R^{D \times *} = \bigcup_{T = 1}^{\infty}\R^{D \times T}\). Our data will be images \(\vX \in \R^{C \times H \times W}\). We consider different augmentations, or \textit{views}, of the input data \(\vX\), such as rotations or crops; we can represent a view as a function \(v \colon \R^{C \times H \times W} \to \R^{D \times N_{v}}\) where \(N_{v}\) is the number of tokens in the view. 

Let \(\Sphere_{d - 1} \subseteq \R^{d}\) be the \((d-1)\)-dimensional \(\ell^{2}\)-sphere. For the purpose of representation learning, we will consider an \textit{encoder} neural network parameterized by weights \(\theta \in \Theta\), as a function \(f_{\theta} \colon \R^{D \times *} \to \Sphere_{d-1} \times \Sphere_{d - 1}^{N}\). We factor \(f_{\theta} = (f_{\theta}^{\cls}, f_{\theta}^{\patch})\) where \(f_{\theta}^{\cls} \colon \R^{D \times *} \to \Sphere^{d-1}\) outputs the so-called \textit{class token feature} (i.e., an aggregate representation of the input data) and \(f_{\theta}^{\patch} \colon \R^{D \times *} \to \Sphere_{d-1}^{N}\) outputs the \textit{patch tokens' features} (i.e., a patch-based representation of the input data). The network is implemented by a Vision Transformer \citep{dosovitskiy2020image,touvron2021training} with appended multi-layer perceptrons (MLPs) to post-process each feature followed by \(\ell^{2}\)-normalizations.

\section{Methods: Simplifying \dino{} and \dino{}v2}

\subsection{Recap of the Original \dino{} Pipeline}

The goal of \dino{} is to learn an aggregate representation of the input image which contains information about large-scale semantics of the input (e.g., the locations and properties of different objects in the image). They do this via a pre-training pipeline \citep{caron2021emerging} which is depicted in \Cref{fig:dino_pipeline}(a), and we also describe it throughout this section. The main idea is to take multiple \textit{views} (i.e., different crops) of the data, and ensure that the features generated by these views are consistent with each other (in a sense which will be made precise shortly) as much as possible. If the views each contain a salient part of the input such as a central object, the feature corresponding to any view would then contain information about this central object. The end goal is that the feature of any large-enough view contains information about all relevant objects in the input image, which can then be extracted for use in downstream tasks such as image classification or image segmentation.

In the rest of the section, we will discuss the pre-training pipeline. As is common in contrastive SSL, the \dino{} framework uses two networks: a so-called \textit{teacher} network parameterized by \(\theta_{\teacher} \in \Theta\), and a so-called \textit{student} network parameterized by \(\theta_{\student} \in \Theta\). During pre-training, the loss will encourage the student's representation to align with the teacher's representation, even as the teacher is simultaneously updated using student weights; this is \textit{self-distillation}, and can be viewed as an optimization strategy or even implicitly regularizing the objective \citep{chen2021exploring}.

During the pipeline, we process each image \(\vX\) in the following way. First, we sample at random a view, or crop, \(v_{\crop}\), independently of \(\vX\); the view can \textit{either} be a ``global view'' (i.e., a large crop) or a ``local view'' (i.e., small crop), selected randomly. We denote \(\vX_{\crop} := v_{\crop}(\vX)\). In addition, we sample a global view \(v_{\globalview}\) independently of \(\vX\) and \(v_{\crop}\), and denote \(\vX_{\globalview} := v_{\globalview}(\vX)\).\footnote{More precisely, let \(\crop\) be a random vector containing the boundaries of the crop, so that \(v_{\crop}\) crops exactly the region supplied by \(\crop\). Analogous notation can be defined for \(\globalview\) and \(v_{\globalview}\).} Views are implemented in the same way as in \dino{}; they are formally described in \Cref{sec:views_formal} for the sake of completeness. 

The first (local or global) view \(\vX_{\crop}\) is fed to the student network\footnote{Note that the parameters \(\theta_{\student}\) and \(\theta_{\teacher}\) each contain a positional encoding over all patches; when a view is fed through the network, it receives a interpolated positional encoding corresponding to the tokens' length.} \(f_{\theta_{\student}}\) to get an aggregate representation \(\vz_{\crop}^{\cls}(\theta_{\student})\), while the global view \(\vX_{\globalview}\) is fed to the teacher network \(f_{\theta_{\teacher}}\) to get \(\vz_{\globalview}^{\cls}(\theta_{\teacher})\), i.e.:
\begin{equation}
    \vz_{\crop}^{\cls}(\theta_{\student}) := f_{\theta_{\student}}^{\cls}(\vX_{\crop}), \qquad \vz_{\globalview}^{\cls}(\theta_{\teacher}) := f_{\theta_{\teacher}}^{\cls}(\vX_{\globalview}).
\end{equation} 
Now, it is certainly possible to directly compare and evaluate these features. However, \dino{} adds post-processing steps, arguing that they improve performance and prevent collapse:
\begin{itemize}
    \item They add weight-normalized linear layers \citep{salimans2016weight} \(h_{\eta_{\student}^{\dino}}, h_{\eta_{\teacher}}^{\dino} \colon \R^{d} \to \R^{m}\) where \(m \gg d\), called the ``\dino{} heads'' and parameterized by \(\eta_{\student}^{\dino}, \eta_{\teacher}^{\dino}\), appended to the end of the student and teacher networks respectively.
    \item They center the teacher-computed features using a learned vector \(\vmu \in \R^{m}\).
    \item They take a temperature-weighted softmax of both features to compute probability vectors in \(\R^{m}\), sometimes called \textit{prototype scores}, which they then can compare using cross-entropy.
\end{itemize}
Mathematically, the post-processing steps to get probability vectors for each view are as follows:
\begin{align}
    &\vp_{\crop}^{\cls}(\theta_{\student}, \eta_{\student}^{\dino}) := \softmax(h_{\eta_{\student}^{\dino}}(\vz_{\crop}^{\cls}(\theta_{\student}))/\tau), \\
    \label{eq:dino_teacher_probability_vector}
    &\scalebox{0.8875}{\(\vp_{\globalview}^{\cls}(\theta_{\teacher}, \eta_{\teacher}^{\dino}, \vmu)  := \softmax([h_{\eta_{\teacher}^{\dino}}(\vz_{\globalview}^{\cls}(\theta_{\teacher})) - \vmu]/\tau)\)},
\end{align}
where \(\tau > 0\) is the temperature parameter. Finally, the loss (to be minimized) encourages \(\vp_{\crop}^{\cls}\) and \(\vp_{\globalview}^{\cls}\) to be close together using a symmetrized cross-entropy-based functional \(d_{\CE}\), which effectively distills the teacher into the student by aligning the predicted outputs:
\begin{align}\label{eq:DINO_loss}
    &\cL_{\dino}(\theta_{\student}, \theta_{\teacher}, \eta_{\student}^{\dino}, \eta_{\teacher}^{\dino}, \vmu) \\
    &:= \Ex[d_{\CE}(\vp_{\crop}^{\cls}(\theta_{\student}, \eta_{\student}^{\dino}), \vp_{\globalview}^{\cls}(\theta_{\teacher}, \eta_{\teacher}^{\dino}, \vmu))] \nonumber
\end{align}
where the expectation is over \(\vX\), the (local or global) view \(v_{\crop}\), and the global view \(v_{\globalview}\) sampled i.i.d., and the function \(d_{\CE}\) is defined via the cross-entropy as 
\begin{align}\label{eq:cross_entropy_divergence}
    d_{\CE}(\vp, \vq) := \frac{1}{2}\bp{\CE(\vp, \vq) + \CE(\vq, \vp)},\\\CE(\vp, \vq) := -\sum_{i = 1}^{m}p_{i}\log q_{i}.
\end{align}
When training, \dino{} estimates the expectation in \eqref{eq:DINO_loss} by a stratified plug-in estimator over a batch of sample images. That is, to estimate the expectation, we condition on \(\vX\) then estimate the conditional expectation \(\Ex[d_{\CE}(\cdot, \cdot) \given \vX]\) via plug-in using several different global views (usually two global views, which play the role of the arbitrary view \(v_{\crop}\) and the global view \(v_{\globalview}\)) and several different local views, and finally average over \(\vX\) to obtain the estimate. 
The optimization of this estimated loss, too, is done in an ad-hoc way; while all four parameters \(\theta_{\student}, \theta_{\teacher}, \eta^{\dino}, \vmu\) are updated at each iteration, they update in different ways:
\begin{itemize}
    \item The student parameters \(\theta_{\student}\) and \(\eta_{\student}^{\dino}\) are updated via an iteration of a stochastic gradient descent (SGD)-type algorithm, such as Adam, on the loss \eqref{eq:DINO_loss}. The backpropagation for the loss gradient is computed assuming the teacher parameters \(\theta_{\teacher}, \eta_{\teacher}^{\dino}\), and \(\vmu\) are ``frozen'' or constants (i.e., ``stop-gradient'').
    \item The teacher parameters \(\theta_{\teacher}\), \(\eta_{\teacher}^{\dino}\), and \(\vmu\) are updated via exponentially moving averages (EMAs) of the student weights \(\theta_{\student}\), the student \dino{} head \(\eta_{\student}^{\dino}\), and the average output of teacher the \dino{} head \(\Ex[h_{\eta_{\teacher}^{\dino}}(\vz_{\crop}^{\cls}(\theta_{\teacher}))]\) (in practice taken over a minibatch), respectively. Formally, for decay parameters \(\lambda, \nu \in [0, 1]\), at each iteration we compute  \(\theta_{\teacher} \gets \lambda \theta_{\teacher} + (1 - \lambda)\theta_{\student}\), \(\eta_{\teacher}^{\dino} \gets \lambda \eta_{\teacher}^{\dino} + (1 - \lambda)\eta_{\student}^{\dino}\), and \(\vmu \gets \nu \vmu + (1 - \nu)\Ex[h_{\eta_{\teacher}^{\dino}}(\vz_{\globalview}^{\cls}(\theta_{\teacher}))]\).
\end{itemize}
The decay parameters \(\lambda, \nu\) and the temperature parameter \(\tau\) change along the optimization trajectory, and their schedules are design decisions which impact convergence. 

As previously mentioned, many of the ad-hoc methods and choices described above are due to a tension: a \textit{trivial} solution to optimizing \eqref{eq:DINO_loss} is to enforce that \(f_{\theta_{\student}}\) and \(f_{\theta_{\teacher}}\) \textit{collapse}, i.e., become or approximate the constant function, which map each local and global view to the same feature \(\vz\) or even to the same probability vector \(\vp\). To explain why \dino{} does not collapse, we wish to highlight the centering operation in \eqref{eq:dino_teacher_probability_vector}, which computes batch statistics during its EMA update, hence using negative samples and implicitly pushing different samples' features apart, even though the precise conceptual mechanism by which this occurs is not clear and involves a careful interaction between the centering vector and temperature scaling \citep{caron2021emerging}. Indeed, \citet{caron2021emerging} shows that collapsed solutions are common without very carefully tuning the EMA schedule and temperature schedule, and arguing that the remaining hyperparameters and choices would severely degrade the performance if perturbed. A more in-depth discussion of the tension, and the added complexity required to train a model in spite of it, is in \Cref{sub:dino_complexity}. As we will see, if this tension is alleviated in an alternative way, many hyperparameters can be removed and the rest can be changed robustly.

\subsection{From \dino{} to \simdino{}}

To go from \dino{} to \simdino{}, we ask the question:
\begin{quote}
    \centering
    \textit{
        Can we directly compare \(\vz_{\crop}^{\cls}\) and \(\vz_{\globalview}^{\cls}\)?
    }
\end{quote}
If we could do this, then we could avoid the large \dino{} head, the centering operation, the softmaxes, and the cross-entropy based loss. However, the mechanism in \dino{} for avoiding representation collapse via negative samples would therefore be removed. Thus, we have a second question:
\begin{quote}
    \centering 
    \textit{
        Can we efficiently use the negative samples' features explicitly to enforce non-collapse?
    }
\end{quote}

For the first question, we argue that the most simple \textit{squared Euclidean distance}, namely
\begin{equation}
    d_{\ell^{2}}(\vx, \vy) := \frac{1}{2}\norm{\vx - \vy}_{2}^{2}
\end{equation}
 works at least as well as the cross-entropy-based functional \eqref{eq:cross_entropy_divergence} applied to an affine transformation of the features, as in \eqref{eq:DINO_loss}. For the second question, we argue that we may directly penalize the covariance of the features in order to avoid collapse, as follows. For a hyperparameter \(\eps > 0\), the (total) coding rate \citep{ma2007segmentation,yu2020learning,li2022neural} of a symmetric positive semidefinite matrix \(\vGamma \in \R^{d \times d}\) is 
\begin{equation}
    R_{\eps}(\vGamma) := \frac{1}{2}\logdet\rp{\vI + \frac{d}{\eps^{2}}\vGamma},
\end{equation}
In words, \(R_{\eps}\) is an approximation to the rate distortion of a Gaussian random variable with covariance \(\vGamma\) (and this approximation is perfect in the limit \(\eps \to 0\)). More concretely, it is a measure of size of the covariance, even if the underlying variables are non-Gaussian. Thus one way to ensure non-collapse is to add \(-R_{\eps}(\Cov[\vz_{v}^{\cls}(\theta_{\student})])\) as a regularizer in the objective, leading to the loss
\begin{align}\label{eq:simdino_loss}
    \cL_{\simdino}(\theta_{\student}, \theta_{\teacher}) := &\Ex[d_{\ell^{2}}(\vz_{\crop}^{\cls}(\theta_{\student}), \vz_{\globalview}^{\cls}(\theta_{\teacher}))] \\
    &- \gamma R_{\eps}(\Cov[\vz_{\crop}^{\cls}(\theta_{\student})]). \nonumber
\end{align}
where \(\gamma > 0\) is a hyperparameter. Note that \(d_{\ell^{2}}(\vz_{\crop}^{\cls}, \vz_{\globalview}^{\cls}) = -(\vz_{\crop}^{\cls})^{\top}\vz_{\globalview}^{\cls}\) since \(\vz_{\crop}^{\cls}, \vz_{\globalview}^{\cls} \in \Sphere_{d - 1}\).

When training, similar to \dino{}, we estimate the expectation and covariance in \eqref{eq:simdino_loss} by a type of plug-in estimator. Namely, the expectation is estimated similar to \dino{}, just using \(d_{\ell^{2}}\) instead of \(d_{\CE}\). To estimate the coding rate, we sub-sample several \(\vz_{\crop}^{\cls}(\theta_{\student})\) over both \(\vX\) and \(v_{\crop}\),\footnote{In practice, we only let \(v_{\crop}\) be a global view for efficiency, and offer the following heuristic justification. If the expected similarity term in \eqref{eq:simdino_loss} is large, then there is little difference between the features of local and global views. Hence \(\Cov[\vz_{\crop}^{\cls}(\theta_{\student})] \approx \Cov[\vz_{\globalview^{\prime}}^{\cls}(\theta_{\student})]\), where \(v_{\globalview^{\prime}}\) is a randomly sampled global view.} estimate \(\Cov[\vz_{\crop}^{\cls}(\theta_{\student})]\) on that sub-sample via plug-in, estimate \(R_{\eps}\) of the population covariance by calculating it on the sample covariance, then average the estimates over all sub-samples. We conjecture that the latter estimator has lower variance compared to the naive plug-in estimator for \(\Cov[\vz_{\crop}^{\cls}(\theta_{\student})]\) as it is similar to variance-reduction methods in statistics \citep{kahn1953methods}, which we hypothesize might be a factor as to why \simdino{} can handle a smaller batch size than other contrastive SSL methods that explicitly use negative samples but avoid collapse using higher-variance or more implicit regularizers.

The overall pipeline is shown in \Cref{fig:dino_pipeline}(b). Note that it is much simpler than \dino{}. We provide pseudocode for the training pipeline in \Cref{alg:simdino_training_pipeline} in \Cref{app:pipeline_pseudocode}.

After training, we discard student weights and use teacher weights for evaluation.

\subsection{From \dino{}v2 to \simdino{}v2}

The pipeline of the \dino{}v2 framework \citep{oquab2023dinov2}, as shown in \Cref{fig:dino_pipeline}(c), is built upon the \dino{} pipeline, and has two main goals: first, learn an \textit{aggregate} representation which contains large-scale semantics of the input (i.e., the goal of \dino{}); second, learn \textit{patch-based} representations which have fine-grained semantic information about each patch and its local neighborhood. The main new ideas to achieve this, drawn from the \ibot{} pipeline \citep{zhou2021ibot}, are that \textit{the input to the student has some masked patches}, and that \textit{the loss also computes similarity of the patch-based features}. To see why this works, consider if some patches are masked, and the model is able to predict masked patches using their unmasked neighbors, then from each patch the model can extract strong information about the semantics of nearby patches, which is an idea similar in spirit to masked autoencoding \citep{he2022masked}. Thus, these two ideas from \ibot{} would furnish our model with informative patch-based representations.

We now discuss the DINOv2 pipeline, before discussing our modifications. Formally, starting with tokenized images \(\vX \in \R^{D \times N}\), we take a view \(v_{\maskcrop}\) sampled at random; the view can be a global view or a local view, \textit{but it also replaces a fraction \(\alpha \in [0, 1]\) of the tokens in the view with a learnable mask token \(\vx_{\mathrm{mask}}\)} (as in \citep{he2022masked}, the mask token is shared across all views). We denote \(\vX_{\maskcrop} := v_{\maskcrop}(\vX)\). We also take a global view \(v_{\globalview}\) \textit{without} masking, independently of \(v_{\maskcrop}\) and \(\vX\), and denote \(\vX_{\globalview} := v_{\globalview}(\vX)\).

Now that we have this setup, we do similar operations to \dino{} pipeline, with some changes:
\begin{itemize}
    \item There are additional ``\ibot{} heads'' for the student and teacher, processing the patch-based features column-wise (i.e., patch-wise), with weights \(\eta_{\student}^{\ibot}, \eta_{\teacher}^{\ibot}\) (cf the ``\dino{} head'' with weights \(\eta_{\student}^{\dino}, \eta_{\teacher}^{\ibot}\)).
    \item The centering operation on teacher-output features is performed on both the aggregate features and (column-wise) on the patch-wise features. 
    \item The centering operation uses three iterations of the Sinkhorn-Knopp algorithm \citep{cuturi2013sinkhorn,caron2020unsupervised}, denoted below by \(\textsc{SKC}\), instead of an EMA, and is parameter-free but more expensive than simple subtraction. Note that the Sinkhorn-Knopp algorithm uses features from all images in each minibatch.
\end{itemize}
Let \(\vz^{i} \in \R^{d}\) be the \(i^{\mathrm{th}}\) column of \(\vZ^{\patch}\) (and similar for \(\vp^{i} \to \vP^{\patch}\)). Then, formally we have (where \(1 \leq i \leq N\))
\begin{align}
    &(\vz_{\maskcrop}^{\cls}(\theta_{\student}), \vZ_{\maskcrop}^{\patch}(\theta_{\student})) := f_{\theta_{\student}}(\vX_{\maskcrop}), \\
    &(\vz_{\globalview}^{\cls}(\theta_{\teacher}), \vZ_{\globalview}^{\patch}(\theta_{\teacher})) := f_{\theta_{\teacher}}(\vX_{\globalview}), \\
    &\vp_{\maskcrop}^{\cls}(\theta_{\student}, \eta_{\student}^{\dino}) := \softmax(h_{\eta_{\student}^{\dino}}(\vz_{\maskcrop}^{\cls}(\theta_{\student}))/\tau), \\
    &\vp_{\maskcrop}^{i}(\theta_{\student}, \eta_{\student}^{\ibot}) := \softmax(h_{\eta_{\student}^{\ibot}}(\vz_{\maskcrop}^{i}(\theta_{\student}))/\tau), \\
    &\scalebox{0.9}{\(\vp_{\globalview}^{\cls}(\theta_{\teacher}, \eta_{\teacher}^{\dino}) := \softmax(\textsc{SKC}[h_{\eta_{\teacher}^{\dino}}(\vz_{\globalview}^{\cls}(\theta_{\teacher}))]/\tau)\)}, \\
    &\scalebox{0.95}{\(\vp_{\globalview}^{i}(\theta_{\teacher}, \eta_{\teacher}^{\ibot}) := \softmax(\textsc{SKC}[h_{\eta_{\teacher}^{\ibot}}(\vz_{\globalview}^{i}(\theta_{\teacher}))]/\tau)\)}.
\end{align}
We then compute the loss using all probability vectors:
\begin{align}
    &\cL_{\dino{}\mathrm{v2}}(\theta_{\student}, \theta_{\teacher}, \eta_{\student}^{\dino}, \eta_{\student}^{\ibot}, \eta_{\teacher}^{\dino}, \eta_{\teacher}^{\ibot}) := \\
    &\frac{1}{2}\scalebox{0.9}{\(\displaystyle\Ex\mathopen{}\Bigg[d_{\CE}(\vp_{\maskcrop}^{\cls}(\theta_{\student}, \eta_{\student}^{\dino}), \vp_{\globalview}^{\cls}(\theta_{\teacher}, \eta_{\teacher}^{\dino}))\)} \nonumber \\
    &\scalebox{0.9}{\(\displaystyle\qquad + \frac{1}{N}\sum_{i = 1}^{N}d_{\CE}(\vp_{\maskcrop}^{i}(\theta_{\student}, \eta_{\student}^{\ibot}), \vp_{\globalview}^{i}(\theta_{\teacher}, \eta_{\teacher}^{\ibot}))\bm{1}_{\maskcrop{}i}\Bigg]\)} \nonumber \\ 
    &\scalebox{0.9}{\(\displaystyle -\gamma \operatorname{Entropy}(\vz_{\maskcrop}^{\cls}(\theta_{\student}))\)}, \nonumber
\end{align}
where \(\bm{1}_{\maskcrop{}i}\) is \(1\) if patch \(i\) is masked by \(v_{\maskcrop}\) and \(0\) otherwise, and the \(\operatorname{Entropy}\) functional is the differential entropy; it plays a similar role as the coding rate \(R_{\eps}\) in \simdino{} (and shortly \simdino{}v2) in ensuring non-collapse. It is estimated by \citet{oquab2023dinov2} using the KoLeo estimator \citep{delattre2017kozachenko}) which explicitly uses negative samples. However, the KoLeo estimator is a non-parametric estimator of the expectation of a function of a high-dimensional probability density \citep{beirlant1997nonparametric}, and so it has relatively poor sample efficiency (i.e., the required batch size to converge in practice is large).

\textbf{We now greatly simplify the above pipeline} using the same ideas as introduced in \simdino{}. Namely, we dispense with the \dino{}/\ibot{} heads, the Sinkhorn-Knopp centering, and the softmaxes, and compute the Euclidean distance-based loss directly on normalized features. We obtain the loss 
\vspace{-0.5em}
\begin{align}\label{eq:simdinov2_loss}
    &\cL_{\mathrm{\simdino{}v2}}(\theta_{\student}, \theta_{\teacher}) 
    := \frac{1}{2}\scalebox{0.86}{\(\displaystyle\Ex\mathopen{}\Bigg[d_{\ell^{2}}(\vz_{\maskcrop}^{\cls}(\theta_{\student}), \vz_{\globalview}^{\cls}(\theta_{\teacher}))\)} \\
    &\scalebox{0.86}{\(\displaystyle+ \frac{1}{N}\sum_{i = 1}^{N}d_{\ell^{2}}(\vz_{\maskcrop}^{i}(\theta_{\student}), \vz_{\globalview}^{i}(\theta_{\teacher})) \bm{1}_{\maskcrop{}i}\Bigg] -\gamma R_{\eps}\big(\Cov[\vz_{\maskcrop}^{\cls}(\theta_{\student})]\big)\)}. \nonumber
\end{align}
The same caveats as in \simdino{} apply with respect to how the expectations and covariances are estimated, and the optimization and evaluation procedures carry over. We provide pseudocode for the training pipeline in \Cref{alg:simdinov2_training_pipeline} in \Cref{app:pipeline_pseudocode}. In the sequel, we will show that these greatly simplified designs actually help the model performance.

\vspace{-0.75em}

\paragraph{Optimal value for \(\gamma\).}
In both the \simdino{} loss \eqref{eq:simdino_loss} and the \simdino{}v2 loss \eqref{eq:simdinov2_loss}, in order to aid learning while making sure neither the distance term nor the regularizer term dominates, we choose \(\gamma\) up to an absolute constant factor so that it balances the asymptotic order of the gradient (Frobenius) norms of both terms. By the Cauchy-Schwarz inequality, it suffices to equalize the norms of the gradients of each term w.r.t.~the features \(\vZ_{\crop}\). Since the features are normalized on the sphere, it holds that the gradient norm of the distance term is \(\order(1)\). For the second term, assuming that we use \(n\) samples to estimate the covariance, \Cref{thm:optimizer_R_eps} (\Cref{sec:theory}) says that the gradient norm of the second term is \(\order(\sqrt{d\min\{d, n\}/n}/\eps)\). To make these equivalent, we take \(\gamma = \Theta(\eps\sqrt{n/(d\min\{d, n\})})\). The same rate holds for \simdino{}v2.  We recognize that this choice of \(\gamma\) is ultimately a heuristic, and the constant factor needs to be tuned, but it helps to scale \simdino{} and \simdino{}v2 in practice.

\vspace{-0.5em}
\section{Experimental Verification} \label{sec:experiments}
In this section, we empirically investigate and evaluate our proposed \simdino{} and \simdino{}v2 models and compare them to the original \dino{} and \dino{}v2 model families. In particular, we examine their differences in training dynamics and learned representation both quantitatively and qualitatively. 
Overall, our experiments show that our proposed \simdino{} model families can achieve better performance and learn representations of higher quality than the original \dino{} families while being significantly simpler and more robust to variations in hyperparameters and architecture.

\subsection{Experimental Setup}

\paragraph{Model architecture.}
Since our method is directly built upon \dino{} and \dino{}v2, we adopt settings as close as possible to the original method for fair comparison. Specifically, for all inputs we set patch size to be 16; we use the small, base, and large models of the ViT \citep{dosovitskiy2020image} architecture as the backbone, which is connected to a projector composed of three MLP layers with a hidden size of 2048 and an output dimension of 256. The output features after the projector are $\ell^2$ normalized. Specifically for original (i.e., unsimplified) \dino{} models, these normalized features are then fed to a weight-normalized linear layer that outputs a high-dimensional (e.g., 65536) vector, before computing the softmax and then the cross-entropy loss.

\paragraph{Datasets and optimization.} For pretraining, we use the ImageNet-1K dataset across all methods. For fair comparison, we closely follow the original works \citep{caron2021emerging, oquab2023dinov2}. We choose AdamW \citep{loshchilov2017decoupled} as the optimizer and adopt the same optimization strategies (e.g., learning rates, warm-up schedules). For multicrop augmentation, we use 10 local views of resolution $96 \times 96$ and 2 global views of resolution $224 \times 224$ for all experiments. We provide more details on hyperparameter choices in~\Cref{app:impl_details}. We also consider several downstream tasks. Specifically, we evaluate our pretrained models on 1) unsupervised object detection and segmentation on COCO {val2017} \citep{lin2014mscoco}, 2) semantic segmentation on ADE20K \citep{zhou2017scene}, and 3) video object segmentation on DAVIS-2017 \citep{Pont-Tuset_arXiv_2017}.

\subsection{Experimental Results}

\paragraph{ImageNet Classification.} We report the classification accuracies on ImageNet-1k in~\Cref{tab:imagenet_performance}. Following \citep{caron2021emerging}, we evaluate both $k$-NN and linear accuracy on the ViT backbones pretrained by the \dino{} model families and our simplified variants. We observe that under both \dino{} and \dino{}v2 paradigms, our simplified methods are able to outperform the original pipelines. Furthermore, we observe that applying identical hyperparameter settings from ViT-B to ViT-L results in instability and divergence in \dino{}, while the same setup yields a steady improvement for \simdino{}. To better understand the optimization dynamics of \simdino{}, we visualize the evolution of accuracy during training in~\Cref{fig:dinov1_knn}. It can be observed that performance of \simdino{} steadily improves as training progresses, while optimization of \dino{} noticeably slows down, with even a slight performance drop near the end of training. Together, these results demonstrate our simplified pipelines' stability and ease of optimization compared to the originals.

\begin{figure}[ht]
    \centering
    \includegraphics[width=\columnwidth]{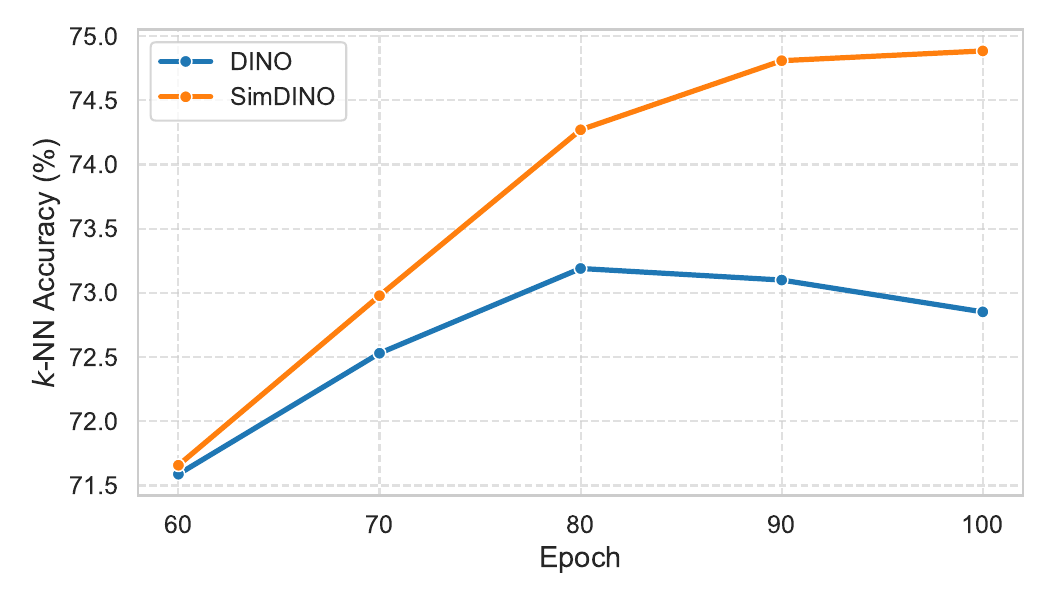}

    \vspace{-1.5em}
    
    \caption{\small\textbf{Evolution of $k$-NN accuracy} of ViT-B trained for 100 epochs using \theirs{} and \ours{} paradigms on ImageNet-1K. We omit earlier epochs of similar metrics for better visual clarity. 
    }
    \label{fig:dinov1_knn}
    \vspace{-1em}
\end{figure}

\begin{table}[ht]
    \caption{\small\textbf{Performance comparison on ImageNet-1K.} \simdino{} and \simdino{}v2 consistently outperform the original \dino{} and \dino{}v2 model families. They are also more stable, while training of \dino{} on ViT-L diverged (row 3).}
    \centering
    
    \vspace{0.5em}
    
    \begin{tabular}{@{}lcccc@{}} %
        \toprule
        Method & Model & Epochs & $k$-NN & Linear
        \\
        \midrule 
        DINO & ViT-B & 100 & 72.9 & 76.3 \\
        SimDINO & ViT-B & 100 & \bf 74.9 & \bf 77.3 \\
        DINO & ViT-L & 100 & -- & -- \\
        SimDINO & ViT-L & 100 & \bf 75.6 & \bf 77.4 \\
        \midrule
        DINOv2 & ViT-B & 100 & 76.0 & 77.2 \\
        SimDINOv2 & ViT-B & 100 & \bf 78.1 & \bf 79.7 \\
        DINOv2 & ViT-L & 100 & 80.8 & 82.0 \\
        SimDINOv2 & ViT-L & 100 & \bf 81.1 & \bf 82.4 \\
        \midrule
        \color{gray} SwAV & \color{gray} ViT-S & \color{gray} 800 & \color{gray} 66.3 & \color{gray} 73.5 \\
        \color{gray} MoCov3 & \color{gray} ViT-B & \color{gray} 300  & \color{gray} -- & \color{gray} 76.7 \\
        \bottomrule
    \end{tabular}
    \label{tab:imagenet_performance}
    \vspace{-1em}
\end{table}

\vspace{-0.5em}

\paragraph{Object Detection and Segmentation.}
To better understand the learned representation, we evaluate the pretrained models on segmentation and object detection tasks. Specifically, we adopt MaskCut \citep{wang2023cut}, an effective unsupervised approach of extracting features from a frozen vision backbone for object detection and instance segmentation. In~\Cref{fig:maskcut_visualization}, we present qualitative segmentation results by applying MaskCut on models trained with both \dino{} and \simdino{}. Both methods are observed to produce meaningful segmentation results, confirming the emerging properties similar to the original \dino{} when using our simplified algorithm. More qualitative results are available in~\Cref{app:vis}. To quantitatively evaluate these representation, we perform MaskCut on the COCO {val2017} dataset and report our results in~\Cref{tab: maskcut_main_result}. These results show \simdino{} achieves much stronger performance on segmentation and detection tasks than \dino{} when trained on the same network (row 2 vs 3), and overall even outperforms \dino{} trained on a smaller patch size\footnote{When trained using \dino{}, ViT models with smaller patch sizes tend to outperform those with larger ones on various tasks including segmentation \citep{wang2023cut,caron2021emerging}. } (row 2 vs 4). 

\vspace{-0.5em}

\paragraph{Semantic Segmentation on ADE20K.} We evaluate our proposed methods on the ADE20K semantic segmentation task and report the results in~\Cref{ade20kanddavis_main_result} (column 3 \& 4). Specifically, we follow the linear evaluation protocol of \citep{zhou2021ibot}, where we fix the pretrained backbone and only finetune a linear layer on top of it. From the results, we observe that our proposed \ours{} consistently outperforms the original algorithms. In particular, on ViT-B, \ours{}v2 is able to improve \dino{}v2 by $4.4$@mIoU. These results suggest that our simplified methods lead to representations favorable to dense prediction tasks.

\vspace{-0.5em}

\paragraph{DAVIS Video Object Segmentation.} In~\Cref{ade20kanddavis_main_result}, we also provide evaluation results on DAVIS-2017 video instance segmentation benchmark. We follow the same evaluation protocol as in \citep{caron2021emerging} and segment scenes between consecutive video frames with nearest neighbor. We observe that our proposed \simdino{}(v2) outperforms the original methods on this task. One interesting observation is that despite achieving much better $k-$NN accuracy, \dino{}v2 generally underperforms the original \dino{} in this task (and similarly for the simplified variants). A similar phenomenon is noted in \citep{zhou2021ibot}, where this discrepancy is found to be caused by the sensitivity of the evaluation protocol itself (e.g., image resolution). In our evaluation, we do not tune these individual factors and simply adopt the same setting across all models we consider.

\vspace{-0.5em}

\paragraph{More on Stability and Robustness.}
Apart from the observed divergence on ViT-L in~\Cref{tab:imagenet_performance}, we note that \dino{} is sensitive to its pipeline-specific hyperparameters, as evidenced in~\Cref{tab:dino_hparam_sensitivity} (in \Cref{app:more_expts}). To further verify the stability of \simdino{}, we experiment with training both algorithms on a different dataset than ImageNet-1k. Specificlly, we train them on COCO train2017 (roughly $1/10$-th the size of ImageNet-1k), and report the results in~\Cref{fig:dinov1_knn_coco}. Under this setting, \simdino{} vastly outperforms \dino{}. We provide additional ablations on other factors (e.g. batch sizes) in~\Cref{app:more_expts}. Together, these results demonstrate the superior stability and robustness of \simdino{}.

\begin{figure*}[t!]
    \centering
    \includegraphics[width=0.90\linewidth]{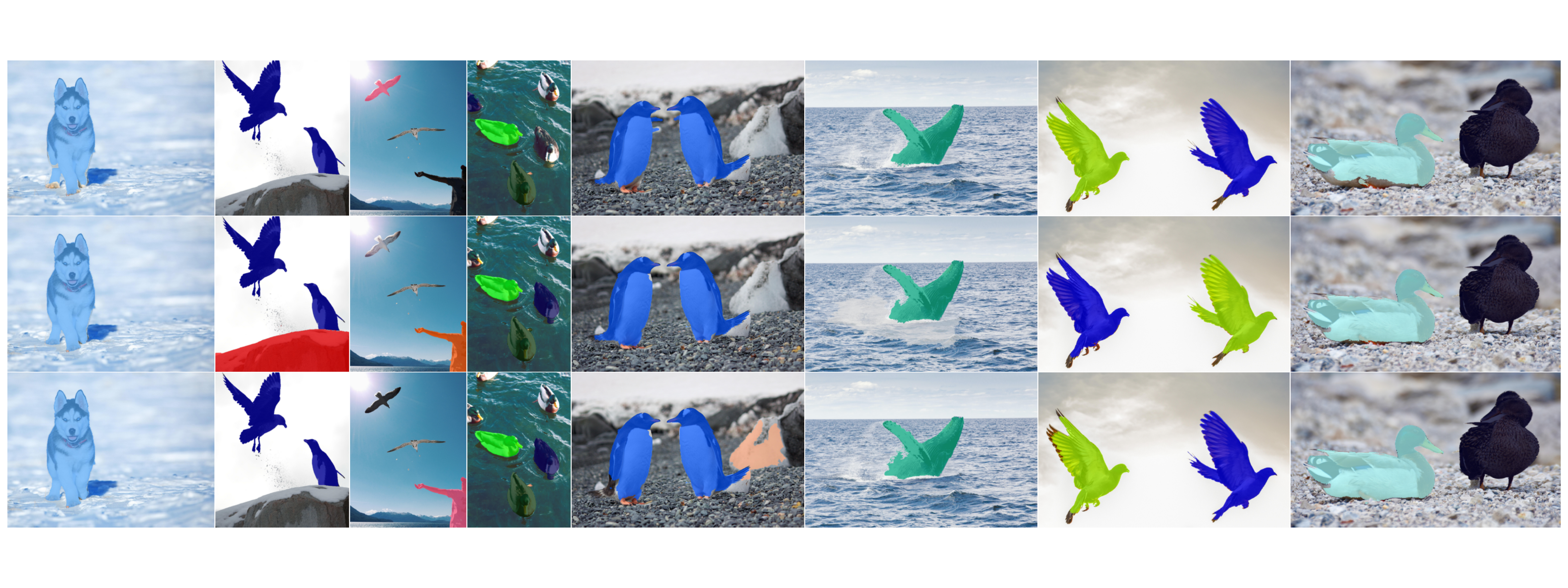}

    \vspace{-2.5em}
    
    \caption{\small \textbf{Visualization of MaskCut segmentation results} from DINO ViT-B/16 (row 1), SimDINO ViT-B/16 (row 2) and SimDINO ViT-L/16 (row 3) on selected images.
    }
    \label{fig:maskcut_visualization}
    \vspace{-1.5em}
\end{figure*}

\begin{figure}[t!]
    \centering
    \includegraphics[width=\columnwidth]{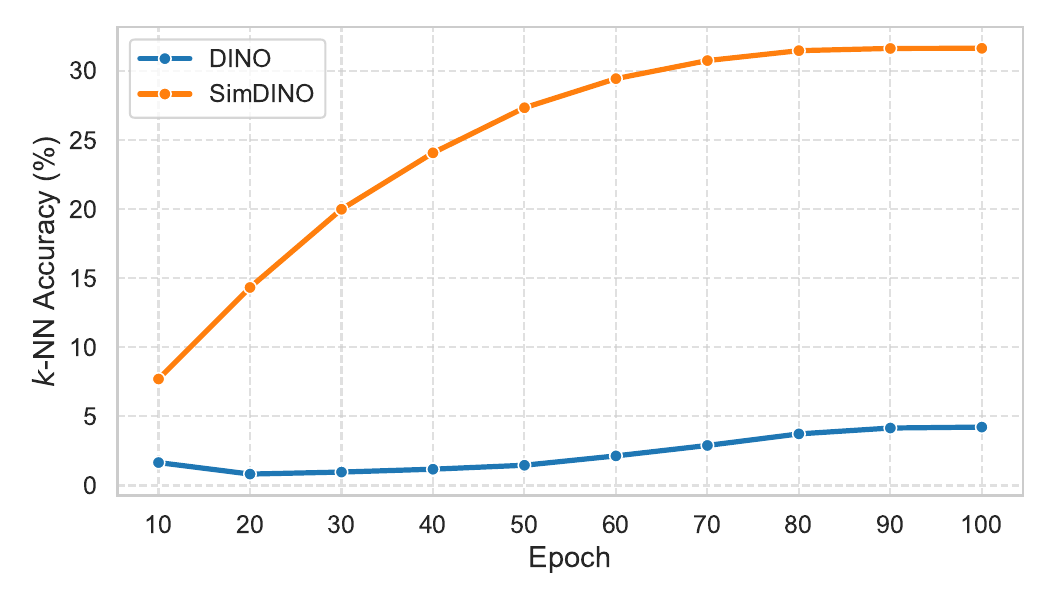}
    
    \vspace{-1.5em}
    
    \caption{\small\textbf{$k$-NN accuracy on ImageNet-1K} of ViT-B trained on \textbf{COCO train2017} using \theirs{} and \ours{} paradigms.  
    }
    \label{fig:dinov1_knn_coco}
    \vspace{-1em}
\end{figure}

\begin{table}[t!]
    \centering
    \small
    \setlength{\tabcolsep}{4pt}
    \caption{\textbf{Unsupervised object detection and segmentation via MaskCut  evaluated on COCO {val2017}} under COCO's official evaluation protocol. 
    \ours{} conclusively performs better than the \dino{} at detection and segmentation metrics, comparable with \dino{} with smaller path size (16 vs 8). 
    }
    \vspace{0.25em}
    \begin{tabular}{@{}llcccccccc@{}}
        \toprule
         &  & \multicolumn{3}{c}{Detection $\uparrow$} &  \multicolumn{3}{c}{Segmentation $\uparrow$} \\ 
        Method & Model & AP$_{50}$  & AP$_{75}$ & AP & AP$_{50}$ & AP$_{75}$ & AP  \\ 
        \midrule
        \ours{} &ViT-L/16 &\bf 5.4 &1.9 &2.4 &4.5 &1.4 &1.9 \\
        \ours{} &ViT-B/16 &5.2 & \bf 2.0 & \bf 2.5 & \bf4.7 & \bf 1.5 & \bf 2.0 \\
        \dino{} &ViT-B/16 &3.9 &1.5 &1.8 &3.1 &1.0 &1.4 \\
        \midrule
        \color{gray} \dino{} & \color{gray} ViT-B/8 & \color{gray}5.1 & \color{gray}2.3 & \color{gray}2.5 & \color{gray}4.1 & \color{gray}1.3 & \color{gray}1.8 \\
        \bottomrule
    \end{tabular}
    \label{tab: maskcut_main_result}
    \vspace{-1.5em}
\end{table}

\begin{table}[t!]
    \centering
    \small
    \setlength{\tabcolsep}{4pt}
    \caption{\textbf{Semantic segmentation on ADE20K and video object segmentation on DAVIS-2017.} For semantic segmentation, we train a linear layer on the frozen pretrained backbone. On DAVIS, we segment scenes between video frames using nearest neighbor search. On both tasks, \ours{}(v2) consistently outperforms their original counterparts.
    }
    \vspace{0.25em}
    \begin{tabular}{@{}lcccccccc@{}}
        \toprule
        &  & \multicolumn{2}{c}{Lin. Seg. $\uparrow$} &  \multicolumn{3}{c}{Vid. Seg. $\uparrow$} \\
        Method & Model & mIoU  & mAcc & $(\mathcal{J} \& \mathcal{F})_m$  & $\mathcal{J}_m$ & $\mathcal{F}_m$\\ 
        \midrule
        \dino{} &ViT-B/16 &33.1 & 41.9 & 63.0 & 61.5 & 64.4 \\
        \ours{} &ViT-B/16 & \bf 33.7 & \bf 42.8 & \bf 63.0 & \bf 61.6 & \bf 64.4 \\
        \dino{}v2 &ViT-B/16 &32.5 &41.4 & 53.2 & 52.7 & 53.7    \\
        \ours{}v2 &ViT-B/16 & \bf 36.9 & \bf 46.5 & \bf 60.9 & \bf 60.4& \bf 61.4  \\
        \dino{}v2 &ViT-L/16 & 41.0 & 50.8 &62.0 & 61.7& 62.3  \\
        \ours{}v2 &ViT-L/16 & \bf 41.8 & \bf 52.2 & \bf 62.6 & \bf 61.9 & \bf 63.3   \\
        \bottomrule
    \end{tabular}
    \label{ade20kanddavis_main_result}
    \vspace{-1.5em}
\end{table}

\section{Related Work} \label{sec:related_work}

In this section, we identify several previous works which the \simdino{} and \simdino{}v2 methodologies are similar to or build on. We have already discussed similarities to \dino{} and \dino{}v2 in depth so we omit this comparison.

\paragraph{Siamese contrastive SSL.} Siamese contrastive learning, archetyped by SimCLR \citep{chen2020simple} and SimSiam \citep{chen2021exploring} among others, uses the same network to encode different augmentations (i.e., views) of the same input, and pushes the features of these augmentations together, similar to \simdino{}. SimCLR uses explicit negative samples in the loss, while SimSiam manipulates the loss gradient structure using stop-gradients to avoid collapse. Both methods' losses measure alignment or difference via the squared Euclidean distance (equivalently the dot product) of the features. In contrast, \simdino{} uses two separate networks --- the teacher and student --- that update via self-distillation. Furthermore, \simdino{} uses the inner product of features in the loss, but it also uses a coding rate regularizer instead of implicitly contrasting negative samples or using the more bespoke contrastive loss in SimCLR.

\paragraph{Explicit covariance regularization in SSL.} There have also been works that use explicit penalization of the first- and second-order statistics of the features, such as VICReg \citep{bardes2021vicreg}. VICReg uses completely separate networks to encode two augmentations of the same input batch, and then explicitly penalizes the alignment of those features (via Euclidean distance) as well as the features' variance and covariance within the batch, aiming to whiten the features as much as possible. In spirit, this is similar to \simdino{}, which also penalizes the alignment and the features' covariance, albeit using a different covariance regularizer and not penalizing the features' variance. Also, \simdino{} uses self-distillation to train the teacher network, while VICReg uses two separate networks.

\paragraph{Self-distillation in SSL.} Several works such as MoCo \citep{he2020momentum} and BYOL \citep{grill2020bootstrap} train two networks, a teacher and a student, via self-distillation by setting the teacher weights to be an exponential moving average of the student weights. While MoCo uses explicit negative samples from previous batches in its InfoNCE loss computed on a given batch, BYOL does not use negative samples but instead manipulates the gradient structure (akin to SimSiam) in order to prevent collapse, and it uses an extra (``prediction'') module appended to the student network, making the teacher and student asymmetric. \simdino{} uses self-distillation with the same architecture for teacher and student, explicitly uses the simple Euclidean distance in the loss, and explicitly uses the coding rate to prevent collapse.

\paragraph{Patch feature prediction in SSL.} While most contrastive SSL methods pick a single feature vector (say, of the \(\cls\) token) as the representation, recent contrastive learning approaches such as \dino{}v2, I-JEPA \citep{assran2023self}, and C-JEPA \citep{mo2024connecting} compute losses on the features corresponding to each patch. In I-JEPA, there is one local and one global view, whose crops are nested, and the (Euclidean distance) loss is only computed on the patch features. C-JEPA adds a VICReg-esque variance and covariance penalty to the objective of I-JEPA. In contrast, in \simdino{}v2, there are multiple local and global views, the loss incorporates both patch-based and aggregate features, and collapse is prevented by using a coding rate term.

\paragraph{Coding rate, and related regularizers.} Several works have used coding rate-related terms in the objective \citep{ma2007segmentation,yu2020learning,dai2022ctrl,tong2022unsupervised} as well as a way to evaluate quality of representations \citep{yu2023white,pai2023masked,wu2024token,yang2024scaling}. The coding rate has thus been shown to provide a powerful measure for non-collapse or expansion of the features from a given batch. Other regularizers to accomplish this include the VICReg-type regularizers and the MMCR regularizer \citep{yerxa2023learning,schaeffer2024towards}.

\section{Conclusion} \label{sec:conclusion}

In this work, we identify that the reasons for many empirically motivated design choices in the original \dino{} and \dino{}v2 are to avoid collapse of the learned representation. We show that these complicated design choices can be significantly reduced or simplified by adding a coding-rate-related regularization term. The resulting simplified models, called \simdino{} and \simdino{}v2, are even better in terms of performance for downstream tasks, and their pretraining pipelines are much more robust to different settings and hyperparameters, offering a Pareto improvement against the \dino{} and \dino{}v2 model families. Our work demonstrates the value of simplifying deep learning pipelines as well as making tradeoffs as explicit as possible when designing high-performance vision SSL models.

In light of these overarching contributions, there are several possible opportunities for future work. On the theoretical side, our simplified framework provides an entry point for studying the geometric properties of the global optima of self-supervised learning losses. Further study in \Cref{sub:dino_without_teacherstudent} shows that in the framework of the paper, it is possible to set up a self-supervised objective that does not require self-distillation to optimize, making a theoretical analysis much easier, while the resulting model is still quite powerful and practically usable. On the empirical side, one can apply the paradigm of making implicit design choices more explicitly present in the loss to more self-supervised learning frameworks, making existing pipelines more stable and the resulting models of better performance.

\FloatBarrier
\bibliography{reference}
\bibliographystyle{icml2025}

\newpage
\appendix
\onecolumn
\section{Formal Description of Local and Global Views} \label{sec:views_formal}

Each local view, say \(v_{\localview}\) acts as follows, given an input image \(\vX\) of shape \((C, H, W)\). First, for a hyperparameter \(p_{\loc} \in [0, 1]\) it crops a rectangular component from \(\vX\) of shape \((C, H_{\localview}, W_{\localview})\), where \(H_{\localview}\) and \(W_{\localview}\) are chosen such that \(H_{\localview}W_{\localview} = p_{\loc}HW\), i.e., the crop is a fraction \(p_{\loc}\) of the whole image. Then the component is resized to shape \((C, S_{\loc}, S_{\loc})\), where \(S_{\loc}\) is a hyperparameter, and then divided into \(N_{\loc} := S_{\loc}^{2}/P^{2}\) square patches of shape \((C, P, P)\), where the patch size \(P\) is a hyperparameter. Each patch is unrolled into a vector of length \(D := CP^{2}\), and the \(N_{\loc}\) unrolled vectors are placed in raster order to get the output \(\vX_{\localview} \in \R^{N_{\loc} \times D}\). Each global view \(v_{\globalview}\) acts the same as a local view, except that the corresponding hyperparameters \(p_{\glo}, S_{\glo}\) are larger than their local counterparts \(p_{\loc}, S_{\loc}\) (hence also \(N_{\glo}\) vs.~\(N_{\loc}\)), while the patch size \(P\) (hence dimension \(D\)) remains the same.\footnote{Of course, we also need the patch size \(P\) to divide both the image sizes \(S_{\loc}\) and \(S_{\glo}\).}

We use these local and global views for training. For evaluation or inference, we do a similar procedure: given \(\vX\) of shape \((C, H, W)\), we resize \(\vX\) proportionally so that its \textit{shorter} edge is length \(L_{\evaluation}\), then take a square crop from the center of shape \((C, S_{\evaluation}, S_{\evaluation})\). This sequence is divided into \(N_{\evaluation} := S_{\evaluation}^{2}/P^{2}\) square patches of length \((C, P, P)\); each patch is unrolled into a vector of length \(D := CP^{2}\), and the \(N_{\evaluation}\) unrolled vectors are placed in raster order to get the output \(\vX_{\evalview} \in \R^{N_{\evaluation} \times D}\).

\section{Complex Interactions in \dino{} and Their Removal} \label{sub:dino_complexity}

We wish to showcase a finer point about why the \dino{} pipeline is so unstable. Notice that 
\begin{align}
    \CE(\vp, \vq) 
    &= -\sum_{i = 1}^{m}p_{i}\log q_{i} \\ 
    &= \sum_{i = 1}^{m}p_{i}\log(p_{i}/q_{i}) - \sum_{i = 1}^{m}p_{i}\log p_{i} \\ 
    &= \KL(\vp, \vq) + H(\vp)
\end{align}
where \(\KL\) is the KL divergence, and \(H\) is the entropy of a probability distribution. Therefore,
\begin{equation}
    d_{\CE}(\vp, \vq) = \underbrace{\frac{\KL(\vp, \vq) + \KL(\vq, \vp)}{2}}_{= d_{\JS}(\vp, \vq)} + \frac{1}{2}(H(\vp) + H(\vq)).
\end{equation}
The first term \(d_{\JS}(\vp, \vq)\) is the Jensen-Shannon divergence, which encourages \(\vp = \vq\). The second term encourages the entropy of \(\vp\) and \(\vq\) to be low, namely closer to one-hot vectors. 

Now consider the \dino{} objective:
\begin{align}
    &\cL_{\dino}(\theta_{\student}, \theta_{\teacher}, \eta_{\student}^{\dino}, \eta_{\teacher}^{\dino}, \vmu) \\
    &= \Ex[d_{\CE}(\vp_{\crop}^{\cls}(\theta_{\student}, \eta_{\student}^{\dino}), \vp_{\globalview}^{\cls}(\theta_{\teacher}, \eta_{\teacher}^{\dino}, \vmu))] \\ 
    &= \Ex\rs{d_{\JS}(\vp_{\crop}^{\cls}(\theta_{\student}, \eta_{\student}^{\dino}), \vp_{\globalview}^{\cls}(\theta_{\teacher}, \eta_{\teacher}^{\dino}, \vmu)) + \frac{H(\vp_{\crop}^{\cls}(\theta_{\student}, \eta_{\student}^{\dino})) + H(\vp_{\globalview}^{\cls}(\theta_{\teacher}, \eta_{\teacher}^{\dino}, \vmu))}{2}}
\end{align}
Suppose that, for example, \(h_{\eta_{\student}^{\dino}}\) and \(h_{\eta_{\teacher}^{\dino}}\) had ranges as a multiple of the all-ones vector, and \(\vmu\) were a constant multiple of the ones vector. Then the first term in the loss would be minimized, but the second term would become as large as possible (since both \(\vp^{\cls}\) would be just \(\frac{1}{m}\vone_{m}\), i.e., probability vectors corresponding to the uniform distribution), so this would not be the optimal solution in general. This implies that the learned \(h_{\eta_{\student}^{\dino}}\) and \(h_{\eta_{\teacher}^{\dino}}\) in general would not both be degenerate. This enables the tradeoff between the EMA parameter \(\lambda\) and the temperature parameter \(\tau\) which enables non-collapse. If the objective just involved the JS divergence and not the entropy term, or else had \(h_{\eta_{\student}^{\dino}}\) be degenerate (manually set and frozen, for instance), or else didn't have a carefully set tradeoff between \(\lambda\) and \(\tau\), then the model would collapse. However, \(\simdino\) removes all of this complexity and replaces it with an explicit coding-rate-type term.

\section{Theory for Hyperparameter Scaling}\label{sec:theory}

Let \(d, n\) be positive integers. Our main theorem is the following.

\begin{theorem}[Scale of \(\nabla R_{\eps}\)]\label{thm:optimizer_R_eps}
    We have 
    \begin{equation}\label{eq:max_R_constrained}
        \max_{\substack{\vZ \in \R^{d \times n} \\ \norm{\vZ_{i}}_{2} = 1\ \forall i}}\norm*{\nabla_{\vZ}R_{\eps}\rp{\frac{\vZ\vZ^{\top}}{n}}}_{F} \leq \frac{\sqrt{d\min\{d, n\}/n}}{4\eps}
    \end{equation}
\end{theorem}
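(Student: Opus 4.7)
The plan is to reduce the maximization to a scalar optimization over the singular values of $\vZ$. Since $R_\eps(\vZ\vZ^\top/n)$ depends on $\vZ$ only through the Gram matrix $\vZ\vZ^\top$, its gradient commutes with the SVD, and $\|\nabla_\vZ R_\eps\|_F^2$ is a function of the singular values $\sigma_1, \ldots, \sigma_r$ alone, with $r = \min\{d, n\}$. The column-norm constraint $\|\vZ_i\|_2 = 1$ translates to $\sum_i \sigma_i^2 = n$ and enters only at the end of the argument.

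First, I would compute the gradient in closed form. Setting $\alpha := d/(n\eps^2)$ and applying $\nabla_\vA \log\det \vA = \vA^{-1}$ together with the chain rule through $\vZ \mapsto \vZ\vZ^\top$, one obtains
\begin{equation*}
\nabla_\vZ R_\eps\!\left(\frac{1}{n}\vZ\vZ^\top\right) = \alpha\bigl(\vI_d + \alpha\vZ\vZ^\top\bigr)^{-1}\vZ,
\end{equation*}
where the factor $\frac{1}{2}$ in the definition of $R_\eps$ cancels with the factor of $2$ coming from the symmetric derivative of $\vZ\vZ^\top$. Next, I would plug in the compact SVD $\vZ = \vU\vSigma\vV^\top$: a short block-diagonal computation (or a direct application of the Woodbury identity) gives $(\vI_d + \alpha\vZ\vZ^\top)^{-1}\vZ = \vU(\vI_r + \alpha\vSigma^2)^{-1}\vSigma\vV^\top$, so by orthogonal invariance of the Frobenius norm,
\begin{equation*}
\left\|\nabla_\vZ R_\eps\!\left(\frac{1}{n}\vZ\vZ^\top\right)\right\|_F^2 = \alpha^2 \sum_{i=1}^r \frac{\sigma_i^2}{(1 + \alpha\sigma_i^2)^2}.
\end{equation*}
Finally, I would apply the scalar inequality $t/(1 + \alpha t)^2 \leq 1/(4\alpha)$ for $t \geq 0$ (from the AM-GM bound $1 + \alpha t \geq 2\sqrt{\alpha t}$, tight at $t = 1/\alpha$), sum over the $r$ terms, and substitute $\alpha = d/(n\eps^2)$ to obtain a bound of the stated form.

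The main obstacle is constant tracking: the argument above yields a bound of the form $c \cdot \sqrt{d\min\{d,n\}/n}/\eps$ for some absolute constant $c$, and sharpening $c$ to match the claimed $1/4$ requires using the column-norm constraint. Specifically, since the per-term maximizer $\sigma_i^2 = 1/\alpha$ is compatible with $\sum_i \sigma_i^2 = n$ only when $\eps^2 = d/r$, in other regimes the constrained maximum of $\sum_i \sigma_i^2/(1 + \alpha\sigma_i^2)^2$ should be strictly smaller than $r/(4\alpha)$. This can be quantified by splitting into cases according to whether the equal-value configuration $\sigma_i^2 = n/r$ falls in the concave or convex portion of $f(t) := t/(1 + \alpha t)^2$ (which changes at $t = 2/\alpha$) and applying Jensen's inequality on the concave portion, combined with a ``merging'' argument on the convex portion to push any mass beyond $2/\alpha$ to the boundary.
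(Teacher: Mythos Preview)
Your approach is exactly the paper's: compute the gradient in closed form, diagonalize via the SVD, and bound each summand by the scalar inequality \(t/(1+\alpha t)^2 \le 1/(4\alpha)\). Crucially, the paper does \emph{not} use the column-norm constraint to obtain its constant; after recording the trace identity it simply drops the constraint and sums the per-term bound over \(r=\min\{d,n\}\) terms (the Remark immediately after the proof says this explicitly). So the ``main obstacle'' you flag, and the proposed Jensen/concave--convex case analysis, are unnecessary: your steps 1--3 are already the entire argument.

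One caveat on the constant. Carrying your (correct) gradient formula through gives \(\|\nabla_{\vZ} R_\eps\|_F^2 \le \alpha^2 \cdot r/(4\alpha) = \alpha r/4\), hence \(\|\nabla_{\vZ} R_\eps\|_F \le \sqrt{d\min\{d,n\}/n}/(2\eps)\), i.e.\ constant \(1/2\) rather than the stated \(1/4\). The paper arrives at \(1/4\) via a factor-of-\(2\) slip: it sets \(f := \logdet(\vI+\alpha\vZ\vZ^\top) = 2R_\eps\), writes \(\nabla f(\vZ) = \alpha(\vI+\alpha\vZ\vZ^\top)^{-1}\vZ\) (omitting the \(2\) coming from the chain rule through \(\vZ\mapsto\vZ\vZ^\top\)), and then divides by \(2\) once more at the end when passing from \(f\) back to \(R_\eps\). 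Your gradient identity already handles the \(1/2\)--\(2\) cancellation correctly, so the discrepancy you anticipated is an arithmetic artifact in the paper, not a place where the constraint buys a sharper bound; no amount of Jensen will close that particular factor of \(2\) uniformly in \((d,n,\eps)\).
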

\begin{proof}
    Let \(\alpha := d/(n\eps^{2})\) and let \(f \colon \R^{d \times n} \to \R\) be defined by
    \begin{equation}
        f(\vZ) := \logdet(\vI + \alpha \vZ\vZ^{\top}),
    \end{equation}
    i.e., \(f(\vZ) = 2R_{\eps}(\vZ\vZ^{\top}/n)\). Now, let \(r := \min\{d, n\}\). For any matrix \(\vM\), let \(\sigma_{i}(\vM)\) be its \(i^{\mathrm{th}}\) largest singular value, for \(i = 1, \dots, d\). First, note that since \(\norm{\vZ_{i}}_{2} = 1\) for all \(i\), it holds 
    \begin{equation}
        \sum_{i = 1}^{r}\sigma_{i}(\vZ)^{2} = \sum_{i = 1}^{d}\sigma_{i}(\vZ)^{2} = \sum_{i = 1}^{d}\sigma_{i}(\vZ\vZ^{\top}) = \tr(\vZ\vZ^{\top}) = \sum_{i = 1}^{d}(\vZ\vZ^{\top})_{ii} = \sum_{i = 1}^{d}\underbrace{\norm{\vZ_{i}}_{2}}_{= 1} = d.
    \end{equation}
    Now, we can simplify the gradient. It holds
    \begin{equation}
        \nabla f(\vZ) = \alpha (\vI + \alpha \vZ\vZ^{\top})^{-1}\vZ.
    \end{equation}
    Thus, it holds that 
    \begin{align}
        \norm{\nabla f(\vZ)}_{F}^{2}
        &= \tr([\nabla f(\vZ)]^{\top}[\nabla f(\vZ)]) \\ 
        &= \alpha^{2}\tr(\vZ^{\top}(\vI + \alpha \vZ\vZ^{\top})^{-2}\vZ).
    \end{align}
    Using that the trace is the sum of singular values, it holds by taking the SVD of \(\vZ\) that
    \begin{align}
        \tr(\vZ^{\top}(\vI + \alpha \vZ\vZ^{\top})^{-2}\vZ)
        &= \sum_{i = 1}^{r}\sigma_{i}(\vZ^{\top}(\vI + \alpha \vZ\vZ^{\top})^{-2}\vZ) \\
        &= \sum_{i = 1}^{r}\frac{\sigma_{i}(\vZ)^{2}}{[1 + \alpha \sigma_{i}(\vZ)^{2}]^{2}}.
    \end{align}
    In this case we directly optimize over the singular values, obtaining the problem 
    \begin{equation}
        \max_{\substack{\vZ \in \R^{d \times n} \\ \norm{\vZ_{i}}_{2} = 1\ \forall i}}\norm{\nabla f(\vZ)}_{F} \leq \max_{\substack{\vx \in \R^{r} \\ x_{i} \geq 0\ \forall i \\ \sum_{i = 1}^{r}x_{i} = d}}\sum_{i = 1}^{r}\frac{x_{i}}{(1 + \alpha x_{i})^{2}}.
    \end{equation}
    The function \(t \mapsto \frac{t}{(1 + \alpha t)^{2}}\) on \([0, \infty)\) has a global maximum at \(t=\frac{1}{\alpha}\), and the value is \(\frac{1}{4\alpha}\). Therefore it follows that
    \begin{equation}
        \max_{\substack{\vx \in \R^{r} \\ x_{i} \geq 0\ \forall i \\ \sum_{i = 1}^{r}x_{i} = d}}\sum_{i = 1}^{r}\frac{x_{i}}{(1 + \alpha x_{i})^{2}} \leq \max_{\substack{\vx \in \R^{r} \\ x_{i} \geq 0\ \forall i}}\sum_{i = 1}^{r}\frac{x_{i}}{(1 + \alpha x_{i})^{2}} = \frac{r}{4\alpha}.
    \end{equation}
    Unpacking this notation, we obtain 
    \begin{equation}
        \norm{\nabla f(\vZ)}_{F}^{2} \leq \alpha^{2}\cdot\frac{r}{4\alpha} = \frac{\alpha r}{4} = \frac{d\min\{d, n\}}{4n\eps^{2}}.
    \end{equation}
    Taking square roots, it holds 
    \begin{equation}
        \norm{\nabla f(\vZ)}_{F} \leq \frac{\sqrt{d\min\{d, n\}/n}}{2\eps}.
    \end{equation}
    Therefore, 
    \begin{equation}
        \norm*{\nabla_{\vZ}R_{\eps}\rp{\frac{\vZ\vZ^{\top}}{n}}}_{F} \leq \frac{1}{2}\norm{\nabla f(\vZ)}_{F} \leq \frac{\sqrt{d\min\{d, n\}/n}}{4\eps}
    \end{equation}
    as desired.
\end{proof}

\begin{remark}
    It is possible that the inequality
    \begin{equation}
        \max_{\substack{\vZ \in \R^{d \times n} \\ \norm{\vZ_{i}}_{2} = 1\ \forall i}}\norm{\nabla f(\vZ)}_{F} \leq \max_{\substack{\vx \in \R^{r} \\ x_{i} \geq 0\ \forall i \\ \sum_{i = 1}^{r}x_{i} = d}}\sum_{i = 1}^{r}\frac{x_{i}}{(1 + \alpha x_{i})^{2}}.
    \end{equation}
    is met with equality; proving this would require exhibiting a \(\vZ\) fulfilling the constraints of the first problem such that it has the prescribed singular values which solve the second problem. We do not need to do so here for the purposes of using the bound (e.g., for learning rate scaling).
\end{remark}

\begin{remark}
    While the quick-and-dirty bound
    \begin{equation}\label{eq:sv_bound_trivial_inequality}
        \max_{\substack{\vx \in \R^{r} \\ x_{i} \geq 0\ \forall i \\ \sum_{i = 1}^{r}x_{i} = d}}\sum_{i = 1}^{r}\frac{x_{i}}{(1 + \alpha x_{i})^{2}} \leq \frac{r}{4\alpha},
    \end{equation}
    by way of ignoring the constraint \(\sum_{i = 1}^{r}x_{i} = d\) seems like it could significantly loosen the bound, we do not believe this is the case. In particular, when \(1/\alpha \leq d/r\), note that setting \(x_{1} = \cdots = x_{r - 1} = 1/\alpha\) and \(x_{r} = d - (r - 1)/\alpha\) sandwiches the objective between \((r - 1)/(4\alpha)\) and \(r/(4\alpha)\), so the maximum is at least the same asymptotic order, in the very reasonable case that \(\eps\) is small enough that \(1/\alpha \leq d/r\), i.e., using the definition of \(\alpha\), such that
    \begin{equation}
        \frac{1}{\alpha} \leq \frac{d}{r} \iff \eps^{2} \leq \frac{d^{2}}{n\min\{d, n\}} \iff \eps^{2} \leq \max\bc{\frac{d}{n}, \frac{d^{2}}{n^{2}}}.
    \end{equation}
    Similar strategies should hold if we allow for an absolute constant \(c \geq 1\) such that \(1/\alpha \leq cd/r\), etc, relaxing the requirement while preserving the asymptotic order of the LHS of \eqref{eq:sv_bound_trivial_inequality}.
\end{remark}

\section{Training Pipeline Pseudocode}\label{app:pipeline_pseudocode}

In this section we provide pseudocode for the training pipelines of \simdino{} and \simdino{}v2.

\begin{algorithm}
    \caption{\simdino{} training pipeline.}\label{alg:simdino_training_pipeline}
    \footnotesize
    \lstinputlisting[language=Python]{pseudocode/simdino_pseudocode.py}
\end{algorithm}

\begin{algorithm}
    \caption{\simdino{}v2 training pipeline.}\label{alg:simdinov2_training_pipeline}
    \footnotesize
    \lstinputlisting[language=Python]{pseudocode/simdinov2_pseudocode.py}
\end{algorithm}

\section{Implementation Details}\label{app:impl_details}

The training codes and hyperparameters for \ours{} and \ours{}v2 are derived from the released official settings in \dino{} and \dino{}v2 separately, see \cref{tab:hyperparameters} for detailed comparison. Notes that for \ours{}v2, we choose to use bfloat16 dtype in student backbone parameters and reductions for better numerical stability while other modules uses the same FSDP mixed precision settings from \dino{}v2.

\begin{table}[h]
\renewcommand{\arraystretch}{1.2}
\centering
\begin{tabular}{|l|l|c|c|c|c|}
\hline
\multicolumn{2}{|c|}{\textbf{Hyperparameter}} & \textbf{\ours{}v2} & \textbf{\dino{}v2} & \textbf{\ours{}} & \textbf{\dino{}} \\ \hline
\multirow{6}*{Model} & Patch size & \multicolumn{4}{|c|}{16} \\ \cline{2-6}
~& Register tokens & \multicolumn{2}{|c|}{4} & \multicolumn{2}{|c|}{0}\\ \cline{2-6}
~& Pos-embedding anti-alias & \multicolumn{2}{|c|}{True} & \multicolumn{2}{|c|}{False} \\ \cline{2-6}
~& Init layer scale & 0.1 &1e-5 & \multicolumn{2}{|c|}{-} \\ \cline{2-6}
~& Drop path rate & \multicolumn{2}{|c|}{0.3} & \multicolumn{2}{|c|}{0.1} \\ \cline{2-6}
~& Weight normalize last layer& \color{red} removed & True & \color{red} removed& True \\ \cline{2-6}
~& Output prototypes K & \color{red} removed & 65536 & \color{red} removed& 65536 \\ \hline
\multirow{8}*{Pipeline} &Init EMA momentum & 0.9 &0.992 & \multicolumn{2}{|c|}{0.996} \\ \cline{2-6}
~& Centering temperature& \color{red} removed &0.07 & \color{red} removed&0.07  \\ \cline{2-6}
~& Warm-up  temperature & \color{red} removed & 0.04 & \color{red} removed & 0.04  \\ \cline{2-6}
~& Warm-up temperature epochs & \color{red} removed & 30 & \color{red} removed& 30  \\ \cline{2-6}
~ & iBOT sample prob. & \multicolumn{2}{|c|}{0.5}&\multicolumn{2}{|c|}{-} \\ \cline{2-6}
~& iBOT mask ratio & \multicolumn{2}{|c|}{0.1-0.5}&\multicolumn{2}{|c|}{-}   \\ \cline{2-6}
~& iBOT head tying &  \multicolumn{2}{|c|}{False} &  \multicolumn{2}{|c|}{-} \\ \cline{2-6}
~& Koleo loss weight& \color{red} removed & 0.1 & \multicolumn{2}{|c|}{-}\\ \hline
\multirow{5}*{Data} &Global crops scale & \multicolumn{4}{|c|}{ 0.4 - 1} \\ \cline{2-6}
~& Local crops scale & \multicolumn{4}{|c|}{ 0.05 - 0.4} \\ \cline{2-6}
~& Local crops number & \multicolumn{4}{|c|}{10} \\ \cline{2-6}
~& Global crops size & \multicolumn{4}{|c|}{224} \\ \cline{2-6}
~& Local crops size & \multicolumn{4}{|c|}{96} \\ \hline
\multirow{9}*{Optim.} &Batch size & \multicolumn{2}{|c|}{128x8} & \multicolumn{2}{|c|}{64x8} \\ \cline{2-6}
~& Epochs & \multicolumn{4}{|c|}{100} \\ \cline{2-6}
~& Warm-up epochs & \multicolumn{4}{|c|}{10} \\ \cline{2-6}
~&Freeze last layer epochs&\color{red} removed & 1 &\color{red} removed  & 1 \\ \cline{2-6}
~& Learning rate & \multicolumn{2}{|c|}{0.004} & \multicolumn{2}{|c|}{0.002} \\ \cline{2-6}
~& Layerwise lr decay & \multicolumn{2}{|c|}{0.9} & \multicolumn{2}{|c|}{-} \\  \cline{2-6}
~& Weight decay &  \multicolumn{4}{|c|}{0.04} \\ \cline{2-6}
~& Weight decay end &  \multicolumn{4}{|c|}{0.4} \\ \cline{2-6}
~& Gradient clip & \multicolumn{2}{|c|}{3.0} & \multicolumn{2}{|c|}{0.3} \\  \hline
\end{tabular}
\vspace{0.1in}
\caption{Training hyperparameters used in the experiments}
\label{tab:hyperparameters}
\end{table}

\section{Additional Experiments}\label{app:more_expts}
\subsection{Ablations on Stability of \dino{} Training}
In~\Cref{tab:dino_hparam_sensitivity}, we study the optimization behavior and stability of \dino{} by varying hyperparameters that are specific to its pipeline. Specifically, we select teacher momentum, whether to apply normalization for the last layer, and teacher temperature. We vary each of them and study their impact on \dino{} training. As shown in~\Cref{tab:dino_hparam_sensitivity}, moderate adjustments for each component leads to divergence (during early training stages). These results suggest DINO training can be highly unstable and requires careful tuning efforts.

\subsection{Ablation Studies on Batch Sizes}
We vary the batch sizes when training ViT-S using \ours{} and report the results in~\Cref{tab:bs_ablation}. We observe that \simdino{} is robust to the choice of batch sizes and can converge to reasonably good performance with a smaller batch size of 256.

\subsection{Experiments on Longer Training}
More training epochs in SSL typically lead to better performance. We provide the performance of \simdino{} when doubling the number of epochs in~\Cref{tab:epochs_ablation}.
Clearly, these results show the efficacy of longer training for \simdino{}.

\subsection{\dino{} without Self-Distillation} \label{sub:dino_without_teacherstudent}
Due to the explicit coding rate regularization, it is possible to train \simdino{} without self-distillation. To validate this, we train ViT-S models on ImageNet-1k by setting the teacher network to be the student network at each iteration, effectively removing the EMA operation. Results are presented in~\Cref{tab:imagenet_no_teacherstudent}. We can see that the original \dino{} collapses under this setup for reasons discussed in \Cref{sub:dino_complexity}, while \simdino{} is able to yield non-trivial performance. It is worth noting that compared to training with full self-distillation, this variant primarily lags behind in terms of $k$-NN performance while the gap in linear probe is significantly smaller.

\begin{table}[t!]
    \caption{\small\textbf{Sensitivity of \dino{} on selected hyperparameters.} We pick three \dino{}-specific hyperparameters (i.e. teacher momentum, last-layer head normalization, teacher temperature) of the official configuration in \citep{caron2021emerging} to study their impact. Varying each one leads to divergence in early training.} 
    \centering
    \vspace{0.25em}
    \begin{tabular}{@{}ccccc@{}}
        \toprule
        Config & Mom. & Norm. & Temp.  & $k$-NN \\
        \midrule
        official (400ep) & $0.996$ & $\checkmark$ & $0.04 \rightarrow 0.07$   & 76.1  \\
        & 0.90 & $\checkmark$ & $0.04 \rightarrow 0.07$   & NaN \\
        & 0.996 & $\times$ & $0.04 \rightarrow 0.07$ & NaN \\
        & 0.996 & $\checkmark$ & $0.07$ & NaN \\
        \bottomrule
    \end{tabular}
    \label{tab:dino_hparam_sensitivity}
\end{table}

\begin{table}[t!]
    \centering
    \begin{tabular}{lccc}
        \toprule
        Batch size & 256 & 512 & 1024 \\
        \midrule
        $k$-NN &  68.3 & 69.7 & 69.6  \\
        \bottomrule
    \end{tabular}
    \caption{\small\textbf{Effect of batch sizes}. We evaluate $k$-NN accuracy of ViT-S pretrained on ImageNet-1k for 100 epochs.}
    \label{tab:bs_ablation}
\end{table}

\begin{table}[t!]
    \centering
    \begin{tabular}{lccc}
        \toprule
        Method & Epochs & $k$-NN & Linear \\
        \midrule
        \dino{} & 100 & 72.9 & 76.3 \\
        \ours{} &  100 & 74.9 & 77.3  \\
        \midrule
        \dino{} & 200 & 73.6 & 77.1 \\
        \ours{} & 200 & 76.0 & 77.7 \\
        \bottomrule
    \end{tabular}
    \caption{\small\textbf{Effect of training epochs}. We evaluate ViT-B pretrained on ImageNet-1k for 200 epochs.}
    \label{tab:epochs_ablation}
\end{table}

\begin{table}[t!]
    \centering
    \begin{tabular}{lccccc}
        \toprule
        Method & Model & self-distillation & Epochs & $k$-NN & Linear \\
        \midrule
        DINO & ViT-S & $\times$ & 100 & -- & -- \\
        SimDINO & ViT-S & $\times$ & 100 & 58.6 &  68.0 \\
        \midrule
        \color{gray} SimDINO & \color{gray} ViT-S & \color{gray} \checkmark & \color{gray} 100 & \color{gray} 69.7 &  \color{gray} 73.6 \\
        \bottomrule
    \end{tabular}
    \vspace{0.1in}
    \caption{Performance on ImageNet-1K without self-distillation.}
    \label{tab:imagenet_no_teacherstudent}
\end{table}

\subsection{Visualization of Attention Maps}\label{app:vis}
Following \cite{oquab2023dinov2,caron2021emerging}, we provide visualizations of self-attention maps of different models for qualitative comparison. We use test images that do not appear during pretraining. More concretely, we compute and visualize the average of self-attention maps across all attention heads from the last layer in~\Cref{fig:attn_head_mean_visualization}. It is clear from the attention maps that all methods studied in our paper lead to prominent segmentation properties that emerge from vision self-supervised learning.

\begin{figure}[t!]
    \centering
    \vspace{-0.4in}
    \includegraphics[width=1.0\linewidth]{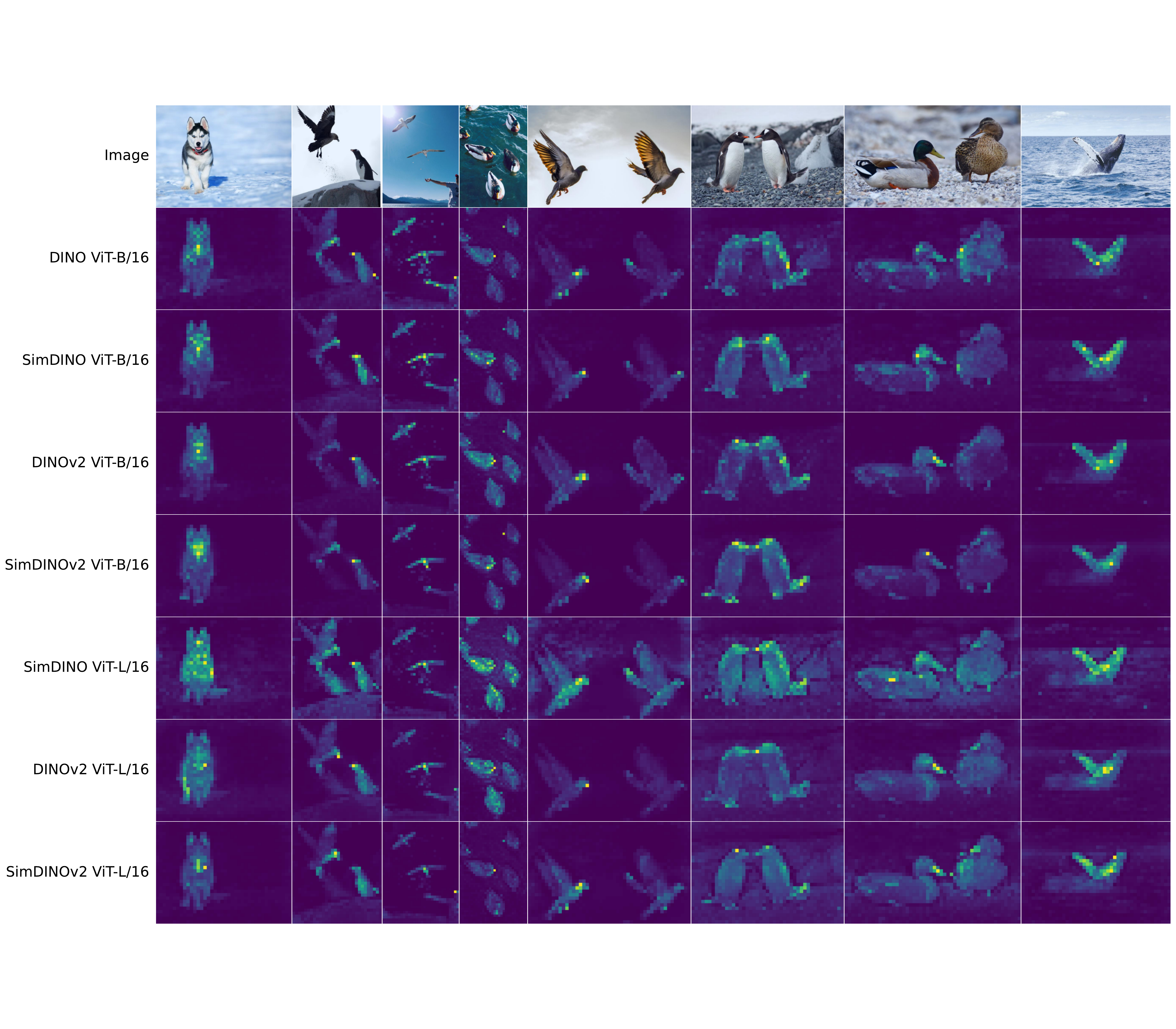}
    \vspace{-0.4in}
    \caption{\small \textbf{Visualization of average self-attention maps} obtained from both \dino{}(v2) and \simdino{}(v2) algorithms.
    }
    \label{fig:attn_head_mean_visualization}
\end{figure}

\end{document}